\newtheorem{theorem}{Theorem}
\newtheorem{proposition}[theorem]{Proposition}
\newcommand{\SYdens}{p}
\newcommand{\correctdist}{P_1}
\newcommand{\correctdens}{p_1}
\newcommand{\correctdenst}{p_1}
\newcommand{\wrongdist}{P_0}
\newcommand{\wrongdens}{p_0}
\newcommand{\wrongdenst}{p_0}
\newcommand{\incorrectdens}{\wrongdens}
\newcommand{\nullh}{\mathcal{H}_N}
\newcommand{\Dcal}{\mathcal{D}_\text{cal}}
\title{\fontsize{14.5}{16}\selectfont E-valuator: Reliable Agent Verifiers with Sequential Hypothesis Testing}
\author{
Shuvom Sadhuka\textsuperscript{1,2},
Drew Prinster\textsuperscript{1,3},
Clara Fannjiang\textsuperscript{1},
Gabriele Scalia\textsuperscript{1},
Bonnie Berger\textsuperscript{2},
Aviv Regev\textsuperscript{1},
Hanchen Wang\textsuperscript{1,4}
\\
\textsuperscript{1}Genentech \quad
\textsuperscript{2}MIT \quad
\textsuperscript{3}Johns Hopkins \quad
\textsuperscript{4}Stanford
}
\begin{document}

\maketitle

\begin{abstract}
\noindent\textit{Agentic} AI systems execute a sequence of actions, such as reasoning, coding or tool calls, in response to user prompts.
To evaluate the success of their trajectories, researchers have developed verifiers, such as process-reward models, to score the quality of each action in an agent's trajectory. Although these heuristic scores can be informative, there are no guarantees of correctness when used to decide whether an agent will yield a successful output.
Here, we introduce \textit{e-valuator}, a method to convert any black-box verifier score into a decision rule with provable control of false alarm rates.
We frame the problem of distinguishing a successful trajectory---that is, a sequence of actions that will lead to a correct response to the user's prompt---from an unsuccessful trajectory as a sequential hypothesis testing problem.
\textit{E-valuator} develops a sequential hypothesis test that remains valid at every step of an agent's trajectory, enabling online monitoring of agents.
Empirically, we demonstrate that \textit{e-valuator} provides greater statistical power and better false alarm rate control than other strategies across six datasets and three agents.
We additionally show that \textit{e-valuator} can quickly terminate unsuccessful trajectories to save tokens. Together, \textit{e-valuator} provides a lightweight framework that converts verifier heuristics into decision rules with statistical guarantees, enabling the deployment of reliable agents.

\end{abstract}

\section{Introduction}

\textit{Agents} are black-box systems that autonomously perform tasks by executing a sequence of actions, called a trajectory.
The actions may include interacting with an external environment through tool calls, writing code, or steps of logical reasoning.
In this work, we use \textit{agents} to specifically denote large language model (LLM)-based systems that respond to user requests; more broadly, the \textit{agent} can refer to robotic agents that execute physical tasks through sequences of mechanical actions \citep{bekey1998autonomous} or game-playing agents that play games through sequences of game-legal actions \citep{brown2019superhuman, silver2017mastering}.
Agents have wide applications, and have delivered promising results across diverse domains, including identifying drug candidates to target diseases \citep{gottweis2025towards, swanson2025virtual}, mining spatial transcriptomic datasets \citep{wang2025spatialagent, huang2025biomni}, and verifying scientific hypotheses  \citep{huang2025automated}.

Nonetheless, agents make mistakes, and it is important to be able to detect these \citep{rabanser2026towards}. To this end, \textit{verifier} models have been developed to numerically score each action in an agent's trajectory. 
These scores are typically used as a proxy for the probability that the trajectory will successfully produce a correct final output. 
Example verifiers include judge LLMs, which provide a score (as text output) after each step \citep{li2025generation}, and process-reward models, which are finetuned to give a prediction of whether each step in a trajectory is ``correct" or ``incorrect" \citep{li2025process, zheng2025processbench, lightman2024let}. 
These verifiers' scores can then be used to identify unsuccessful trajectories, as those that will produce an incorrect final output.

A key limitation of verifiers to date is that when their scores are used to decide whether to flag a trajectory as incorrect, there are no guarantees on the resulting probability of error of this downstream decision.
Rigorous guarantees may become particularly critical when agents are deployed in high-stakes settings with real-world implications, such as autonomous labs \citep{leong2025steering}, gene editing \citep{qu2026crispr}, or hospital operations \citep{Gebreab2024-ej}.
In particular, we focus on guaranteeing control over the \textit{false alarm rate}, or the probability of incorrectly flagging a successful trajectory as unsuccessful, as a way of ensuring that any alarms are statistically trustworthy.

Even if verifier scores satisfy popular notions of probabilistic ``correctness,'' such as marginal calibration \citep{guo2017calibration}, such notions do not provide guarantees on the false alarm rate. Furthermore, each action in a trajectory costs time and resources, and trajectories can be long. We therefore want to detect that a trajectory will be unsuccessful early on---ideally, after as few actions as possible---instead of having to incur the costs of rolling out the full trajectory before making the decision. These desiderata necessitate continuously assessing the verifier score (e.g., after each action), a sequential testing problem that exacerbates the false alarm issues of naive approaches, particularly since the length of the complete trajectory is not known in advance.

Fine-tuning or building better verifiers does not directly provide error guarantees. Furthermore, fine-tuning verifiers requires sufficient compute as well as white-box access to the verifier (and possibly the agent) weights.
Even with sufficient compute and access to the verifier weights, it may be impractical to obtain sufficient data appropriate for fine-tuning verifiers: one needs a ``correctness'' label for every action in every trajectory \citep{lightman2024let}.

\begin{figure*}[t]
    \vspace{-0.5em}
    \centering
    \includegraphics[width=\textwidth, trim=0 200 0 150, 
    clip]{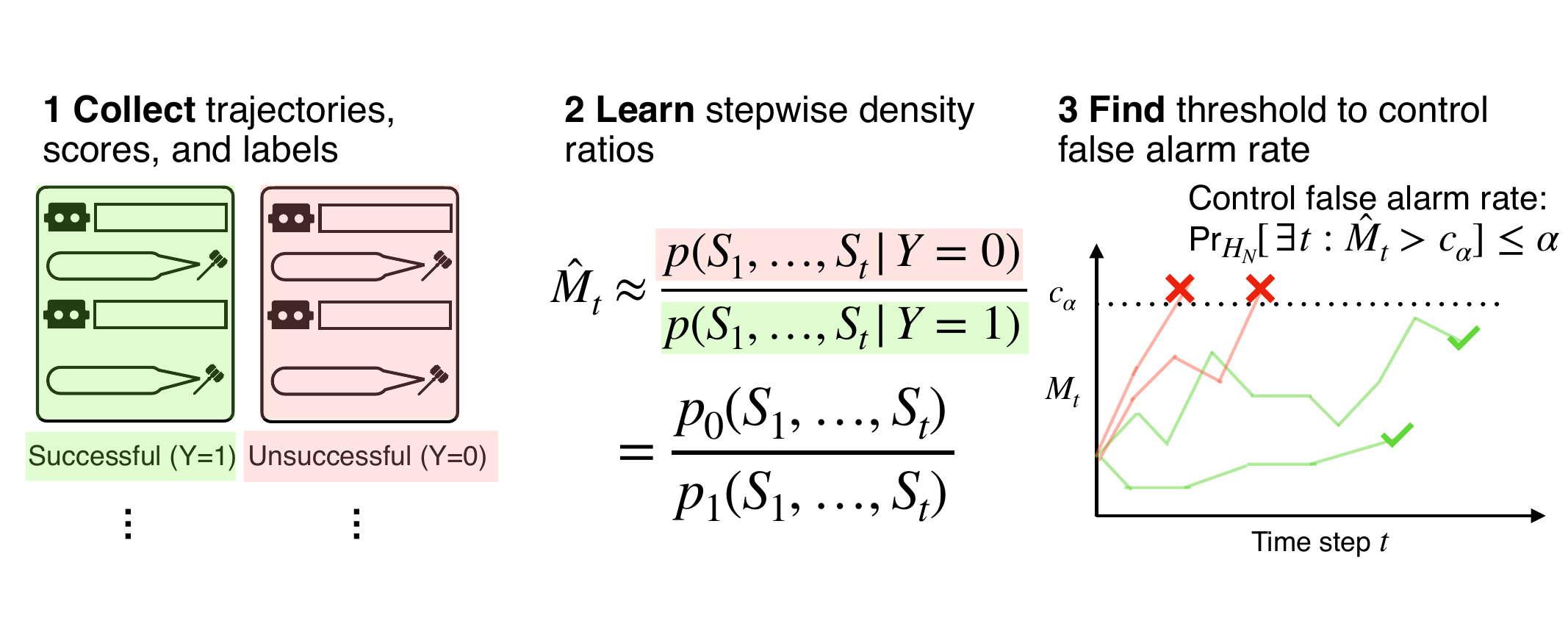}
    \caption{\textbf{E-valuator overview}. \textit{E-valuator} works in three steps. First, we collect a small calibration set of trajectories, verifier scores, and labels. Second, we learn the density ratios $\hat{M}_t$ at each timestep $t$.
    Third, we find a decision threshold that controls the false alarm rate with high probability. 
    Using this threshold, unsuccessful trajectories (red) are rejected at a higher rate than successful ones (green). 
    }
    \label{fig:overview}
    \vspace{-2em} 
\end{figure*}

To tackle these challenges, we introduce \textit{e-valuator}, a lightweight statistical wrapper that converts scores from any black-box verifier into a decision rule for detecting unsuccessful agent trajectories, with guarantees on the false alarm rate (see Figure ~\ref{fig:overview}). To do so, we first frame the problem as a hypothesis test. 
We assume that the sequence of verifier scores is drawn from one distribution, $\correctdens$, for successful trajectories (i.e., those that will produce a correct final output), and another distribution, $\incorrectdens$, for  unsuccessful trajectories.
Given a new trajectory, the problem then reduces to deciding, after as few actions as possible, whether the verifier score sequence is drawn from $\correctdens$ or $\incorrectdens$. 

In designing this hypothesis test, we make two contributions. 
First, we provide a probably-approximately-correct (PAC) thresholding procedure that controls the false alarm rate with high probability for \textit{any} sequential scoring process, using a small calibration set. 
Second, we draw on prior work in sequential hypothesis testing and e-processes \citep{ramdas2024hypothesis, wald1948optimum} to construct a specific sequential process atop the black-box verifier scores---the density ratios between $p_1$ and $p_0$ for the partial trajectories at each step---that is \textit{log-optimal}, meaning it accumulates evidence against unsuccessful trajectories as fast as possible. In practice, $p_1$ and $p_0$ are not known, so we learn a model of this ratio from the calibration set and account for potential estimation error via our PAC thresholding procedure.

The density ratio construction in our second contribution draws on e-values and sequential hypothesis testing. E-values are an alternative to $p$-values that are particularly useful in settings where one wants to run a sequence of hypothesis tests (e.g., ``is this ongoing trajectory successful?'') but might not know the number of tests beforehand or may want a cumulative metric of evidence over time. In our setting, for instance, trajectories are of variable length, and a practitioner may wish to either terminate unpromising trajectories as soon as possible or continually track a metric of reliability.

Importantly, \textit{e-valuator} complements any future improvements to verifiers: while it guarantees control of the false alarm rate for any agent and verifier (with respect to a calibration set), deploying it with better verifiers will tend to yield decision rules with higher power---that is, higher rates of correctly flagging unsuccessful trajectories. \textit{E-valuator} requires minimal compute and can run on a standard laptop.

Empirically, across six datasets and three agents, \textit{e-valuator} provided better false alarm rate control and statistical power than other baselines, such as a raw, calibrated, or PAC thresholded verifier alone. We also show that e-valuator terminates unsuccessful trajectories earlier than these baselines, which enables more favorable tradeoffs between token usage and accuracy. For instance, e-valuator recovers up to 90\% of an agent's original accuracy with just 80\% of the tokens on one dataset tested.

\section{Methods}
\subsection{Problem Setting}\label{sec:overview}
Given a user prompt, denoted $o_0$, the agent executes a sequence of $T \in \mathbb{N}_+$ actions, $(a_1, \ldots, a_T)$, where $\mathbb{N}_+ := \{1, 2, ...\}$ and $T$ is a random variable depending on both the prompt and the agent's internal randomness.
Associated with each action, $a_t$, is an observation, $o_t$, which captures the environment state after the action is executed (e.g., if $a_t$ performs some intermediary arithmetic calculation, $o_t$ could contain the calculated value).
The actions and observations together form the \textit{trajectory} at each step $t$, $H_t = (o_0, a_1, \ldots, a_t, o_t)$.
After each step $t$, a black-box verifier model $v$ takes as input the trajectory $H_t$ and provides a score, $S_t = v(H_t)$, which serves as a heuristic evaluation of the quality of the trajectory thus far.
Typically, $S_t \in [0,1]$, although \textit{e-valuator} supports score values of any type.
The verifier scores form a sequence $\mathbf{S} = (S_1, \ldots, S_T)$, and we denote $\mathbf{S}_{[1:t]} = (S_1, \ldots, S_t)$. A complete trajectory $H_T$ is associated with a binary label, $Y \in \{0,1\}$, of whether the final output, $o_T$, is correct $(Y = 1)$ or not $(Y = 0)$. We call trajectories where $Y=1$ \textit{successful} trajectories and those with $Y=0$ \textit{unsuccessful} trajectories.


\subsection{Evaluation via hypothesis testing}
\textbf{Sequential hypothesis test.} Given a new score sequence, $\mathbf{S}$, our goal is to determine whether the agent's trajectory will produce a correct ($Y=1$) or incorrect ($Y=0$) final output.
We formalize this goal as a hypothesis test.
Let $\correctdist$ and $\wrongdist$ denote the distributions of score sequences conditioned on correct and incorrect final outputs, respectively, with corresponding densities $\correctdens$ and $\wrongdens$.
That is, $\correctdens(\mathbf{S}) = \SYdens(\mathbf{S} \mid Y = 1)$ and $\wrongdens(\mathbf{S}) = \SYdens(\mathbf{S} \mid Y = 0)$.

 Assuming $P_1  \neq P_0$\footnote{If $P_1 = P_0$, the score sequences would be identically distributed regardless of whether the trajectory is successful or not, meaning the verifier scores carry no information about correctness. In this case, no test based on the scores can do better than random guessing.}, we test between a ``null'' hypothesis, $\mathcal{H}_N$, and an ``alternative'' hypothesis, $\mathcal{H}_A$:
\begin{align*}
    \mathcal{H}_N &: \mathbf{S} \sim \correctdist \;\;\; \text{(the final output is correct)}\\ 
    \mathcal{H}_A&: \mathbf{S} \sim \wrongdist \;\;\; \text{(the final output is incorrect).}
\end{align*}
Note that $\correctdist$ and $\wrongdist$ generally encode complex dependencies between the scores over time.
That is, black-box verifier scores are generally not independent samples from a fixed distribution at every step, or otherwise amenable to convenient assumptions.

We construct a \textit{sequential test} between $\mathcal{H}_N$ and $\mathcal{H}_A$ that can be run at each step $t$, using only $\mathbf{S}_{[1:t]}$, the scores based only on the agent's trajectory up to step $t$.
Specifically, we construct a sequence of test statistics, $(M_t)_{t=1}^T$ where $M_t$ is the output of a real-valued function of $\mathbf{S}_{[1:t]}$, and a real-valued decision threshold, $c_\alpha$, given a user-specified error level, $\alpha \in (0, 1)$.

\textbf{False alarm rate control.} For $t = 1, 2, \ldots$, if $M_t > c_\alpha$, we reject $\mathcal{H}_N$.
If we reach $t = T$, the end of the agent's trajectory, without rejecting $\mathcal{H}_N$, then we accept $\mathcal{H}_N$. Our primary goal in designing this sequence $(M_t)_{t=1}^T$ and selecting the threshold $c_\alpha$ is to control the \textbf{false alarm rate}. We define \textbf{false alarm rate (FAR) control} as\footnote{One could instead control the missed detection rate, i.e., the probability of failing to flag an unsuccessful trajectory as unsuccessful. In this case, one would swap $\mathcal{H}_N$ and $\mathcal{H}_A$ and apply the same techniques. We leave the choice of which error rate to control to the user.}
\begin{equation}\label{eq:far}
    \text{Pr}_{\mathcal{H}_N}[\exists \ t \in [T] : M_t > c_\alpha] \leq \alpha
\end{equation}
where $T$, the length of the complete trajectory, is also random, and $[T] := \{1, \ldots, T\}$. In words, the \textit{false alarm rate}, or the rate at which we reject successful trajectories, is guaranteed to be at most $\alpha$. Notice that the probability under consideration is that $M_t$ \textit{ever} surpasses $c_\alpha$---equivalently, that we \textit{ever} reject the null hypothesis, $\mathcal{H}_N$---regardless of $T$, the total number of actions the agent takes (which varies from trajectory to trajectory).

Of course, controlling the false alarm rate alone is insufficient; the trivial decision rule that never rejects has a false alarm rate of zero.
Indeed, the choice of $(M_t)_{t=1}^T$ determines how quickly the test accumulates evidence against the null ($\mathcal{H}_N$) when the trajectory is unsuccessful ($\mathcal{H}_A$). We will return to the choice of $(M_t)_{t=1}^T$ in Section \ref{sec:logoptimal}. 

Concretely, our method randomly splits our calibration data, $\mathcal{D}_\text{cal}$, into disjoint sets, $\mathcal{D}_\text{DRE}$ and $\mathcal{D}_\text{threshold}$, such that $\mathcal{D}_\text{cal} =  \mathcal{D}_\text{DRE} \cup\mathcal{D}_\text{threshold}$.
We design the scoring process, $(M_t)_{t=1}^T$, based on $\mathcal{D}_\text{DRE}$ (Section~\ref{sec:logoptimal}), and select the decision threshold, $c_\alpha$, using $\mathcal{D}_\text{threshold}$ (Section~\ref{sec:pac}).  

\subsection{False alarm rate control via PAC thresholding}\label{sec:pac}
\begin{algorithm}[tb]
\caption{Probably-approximately-correct (PAC) threshold.}
\label{alg:quantile}
\begin{algorithmic}[1]
   \STATE \textbf{Inputs:} error level $\delta \in [0, 1]$; quantile level $\alpha \in [0, 1]$; calibration data $\mathcal{D}_\text{threshold}$; functions that score the process at each step $t$, $\{f_t\}$.
   \STATE \textbf{Output:} decision threshold $c_\alpha$ that achieves FAR control with high probability.
   \STATE $n \leftarrow |\{(\mathbf{S}, Y) \in \mathcal{D}_\text{threshold} : Y = 1\}|$
   \IF{$(1-\alpha)^n > \delta$}
       \STATE $c_\alpha \leftarrow \infty$ \COMMENT{Too few calibration samples for given $\alpha, \delta$}
       \STATE \textbf{return} $c_\alpha$
       
   \ENDIF
   \FOR{$(\mathbf{S}^{(i)}, Y^{(i)}) \in \mathcal{D}_\text{threshold}$ such that $Y^{(i)} = 1$}
       \STATE $M_1, \ldots, M_T \leftarrow f_1(\mathbf{S}_{[1:1]}^{(i)}), \ldots, f_T(\mathbf{S}_{[1:T]}^{(i)})$
       \STATE $M^{(i)} \leftarrow \max_t M_t$ \COMMENT{Maximum for $i$-th null sample}
   \ENDFOR
   \STATE Sort $M^{(1)}, \ldots, M^{(n)}$ in ascending order, $M_{(1)} \leq \cdots \leq M_{(n)}$. \COMMENT{Break ties by flipping fair coin}
   \STATE $k \leftarrow \min\{j \in [n]: \Pr[\text{Bin}(n, 1 - \alpha) \geq j] \leq \delta\}$ \COMMENT{Find index using Binomial tail bound}
   \STATE $c_\alpha \leftarrow M_{(k)}$
\end{algorithmic}
\end{algorithm}
In this section, we provide a procedure that controls the FAR (Eq. \ref{eq:far}) for \textit{any} process $(M_t)_{t=1}^T$. 
In particular, let $(f_t)_{t=1}^T$ be a sequence of functions where $f_t: \mathbb{R}^t \rightarrow \mathbb{R}$ takes as input the (partial) sequence $\mathbf{S}_{[1:t]} \in \mathbb{R}^t$ and returns $M_t \in \mathbb{R}$ (e.g., $f_t$ could be the identity function, $M_t = f_t(\mathbf{S}_{[1:t]}) = S_t$). We will return to the design of $(f_t)_{t=1}^T$ in Section \ref{sec:logoptimal}.

Our goal is to find a threshold $c_\alpha$ for any $\alpha \in (0,1)$, such that $\text{Pr}_{\mathcal{H}_N}[\exists \ t \in [T] : M_t > c_\alpha] \leq \alpha$.  Note that rejecting $\nullh$ if the process $M_t$ ever surpasses $c_\alpha$ is equivalent to rejecting $\nullh$ if $\max_t M_t > c_\alpha$. Thus, it suffices to set $c_\alpha$ to the $(1-\alpha)$ quantile of the distribution of $\max_t M_t$ under the null.

Accordingly, to set $c_\alpha$, we focus on successful trajectories corresponding to $\nullh$. For each $i$th trajectory in the threshold calibration set $\mathcal{D}_\text{threshold}$, we record the maximum $M_t$ over all steps, $M^{(i)} = \text{max}\{M_1^{(i)}, \ldots, M_T^{(i)}\}$. This maximum value, $M^{(i)}$, is a sample from the null distribution.
Given these samples, we construct $\hat{q}_{1-\alpha}$, a high-probability upper bound on the $(1 - \alpha)$-quantile of the null distribution (Alg.~\ref{alg:quantile}). We refer to this procedure as \textbf{probably-approximately-correct (PAC)} thresholding. In particular, with probability at least $1 - \delta$ for any user-specified $\delta$ (``probably''), the procedure controls the false alarm rate under a user-specified $\alpha$ (``approximately correct'').
\begin{proposition}[PAC thresholding]
    \label{prop:pac}
    Let $\{M_t\}_{t \in \mathbb{N}_+}$ denote any scoring process where $M_t = f_t(\mathbf{S}_{[1:t]})$ for a sequence of (deterministic) functions $(f_t)_{t \in \mathbb{N}_+}$ and $M_t \in \mathbb{R}$.
    For fixed error level $\delta \in (0, 1)$ and quantile level $\alpha \in (0, 1)$, let $c_\alpha$ be the output of Algorithm~\ref{alg:quantile}.
    Then,
    \begin{align*}  \text{Pr}_{\mathcal{D}_{cal}} \left(\text{Pr}_{\nullh} \left(\exists \ t  \in [T]: M_t > c_\alpha \mid \mathcal{D}_\text{cal} \right) \leq \alpha \right) \geq 1 - \delta.
    \end{align*}
\end{proposition}

See Appendix~\ref{sec:proof_pac} for the proof. The thresholding step in Algorithm~\ref{alg:quantile} is similar to conformal procedures \citep{vovk2005algorithmic, bates2023testing}: a distribution-free quantile estimated from order statistics of a calibration set, with the index chosen via a binomial tail bound. We apply it to the maximum of a sequentially-computed scoring process $(M_t)_{t=1}^T$, which is what yields FAR control over a random-length trajectory.

Finally, observe that Proposition~\ref{prop:pac} involves two error parameters: $\alpha$ (the quantile level) and $\delta$ (the probability of FAR miscalibration). By a union bound, the marginal false alarm rate satisfies $\Pr_{\nullh}[\exists\, t \in [T] : M_t > c_\alpha] \leq \alpha + \delta$. Thus, to control the marginal false alarm rate at a desired level $\alpha_0$, one can choose any $\alpha$ and $\delta$ such that $\alpha + \delta \leq \alpha_0$. Additionally, the choice of $\alpha$ and $\delta$ is constrained by the number of samples with $Y=1$ the calibration set, denoted $n$ (see L3-6 of \ref{alg:quantile}). If $n$ is too small, the algorithm returns $c_\alpha = \infty$ and the procedure never rejects (trivially controlling FAR but obtaining zero power). This occurs when $n < \log \delta / \log(1-\alpha)$ (see Appendix~\ref{sec:proof_pac} for details). Thus, we recommend setting $\alpha, \delta$ such that $n \geq \lceil \log \delta / \log (1-\alpha) \rceil$.

\subsection{Designing a (log-)optimal test statistic}\label{sec:logoptimal}
Although Proposition \ref{prop:pac} controls the false alarm rate with high probability for any function $(f_t)_{t \in \mathbb{N}_+}$, it does not guide \textit{which} functions we should use. 
Our test rejects a trajectory when $M_t$ first exceeds $c_\alpha$. Thus, an ideal $(f_t)_{t \in \mathbb{N}_+}$ will have $M_t$ grow rapidly under $\mathcal{H}_A$ (when the trajectory is unsuccessful).

\textbf{E-processes.} To do so, we draw from \textit{e-processes} \citep{ramdas2024hypothesis}, a framework for sequential hypothesis testing that enables principled construction of $(f_t)_{t \in \mathbb{N}}$ and additional theoretical guarantees. An e-process for $\mathcal{H}_N$ is a non-negative stochastic process, $(E_t)_t$, such that each $E_t$ is an \textit{e-value} for $\mathcal{H}_N$---that is, $\mathbb{E}_{\mathcal{H}_N}[E_t] \leq 1$---and there exists a \textit{test martingale} for $\mathcal{H}_N$, $(M_t)_t$, such that $E_t \leq M_t$ always.
A test martingale for $\mathcal{H}_N$ is a sequence, $(M_t)_t$, that satisfies: (a) \textbf{Non-negativity and unit mean}: $M_t$ is non-negative for all $t$ and $E_{\mathcal{H}_N}[M_0] \leq 1$. (b) \textbf{Martingale}. $(M_t)_t$ is a martingale for $\mathcal{H}_N$. For all $t$, $\mathbb{E}_{\mathcal{H}_N}[M_t \mid M_0, \ldots, M_{t-1}] = M_{t-1}$. Note that any test martingale is an e-process, but that e-processes are a broader class of processes (not just test martingales). 

E-processes are particularly useful as their concentration behavior can be exploited to provide direct FAR control. In particular, when $(M_t)_{t=1}^\infty$ is an exact e-process, Ville's inequality \citep{doob1939jean} gives a distribution-free and time-uniform threshold: $\text{Pr}_{\mathcal{H}_N}[\exists \ t \in \mathbb{N}_+ : M_t \geq 1/\alpha] \leq \alpha$. That is, setting $c_\alpha = 1/\alpha$ achieves FAR control without any calibration data when $(M_t)_{t=1}^\infty$ is an exact e-process.

\textbf{Density ratio is log-optimal.} The density ratio process provides a natural test martingale for our setting. In particular, we use the ratio between the alternative density, $p_0$, and null density, $p_1$ at each step $t$. Specifically, set $M_0 = 1$, and, for each step $t \in [T]$,
\begin{align}\label{eq:dens_ratio}
    M_t & = f_t(\mathbf{S}_{[1:t]}) = \frac{\wrongdenst(\mathbf{S}_{[1:t]})}{\correctdenst(\mathbf{S}_{[1:t]})}.
\end{align}
It turns out that under $\mathcal{H}_A$, the density ratio process is log-optimal, meaning it grows the fastest (in expectation, log-scaled) over time among all e-processes. That is, if $M_t = \frac{\wrongdenst(\mathbf{S}_{[1:t]})}{\correctdenst(\mathbf{S}_{[1:t]})}$ denotes the density ratio process, then for any other e-process $(M_t')_{t=0}^T$ and stopping time $\tau$, $E_{\mathcal{H}_A}[\log M_\tau] \geq E_{\mathcal{H}_A} [\log M_\tau']$. See Appendix~\ref{sec:appdx:logopt} for a formal statement and proof of this. Log-optimality is analogous to being the ``most powerful" test statistic in non-sequential hypothesis testing: intuitively, $M_t$ will tend to surpass the decision threshold, and correspondingly enable detection of unsuccessful trajectories earlier than other e-processes. 


In practice, the forms of $\correctdenst$ and $\wrongdenst$ are typically  not known.
Prior to deployment, the method we propose, \textit{e-valuator}, therefore has a calibration phase in which it uses a split of the calibration data, $\mathcal{D}_\text{DRE}$, to learn a model of the density ratio, $\hat{M}_t(\mathbf{S}_{[1:t]}) \approx \frac{\wrongdenst(\mathbf{S}_{[1:t]})}{\correctdenst(\mathbf{S}_{[1:t]})}$, for each step $t$. Indeed, even if one cannot exactly evaluate the true densities, one can learn the density ratios and approximate the log-optimal tests, as is done in prior works \citep{dandapanthula2025offline}. We expand on the density ratio estimation below.


\subsection{Density ratio estimation}\label{sec:dre}
To estimate our density ratios, we use classifier-based density ratio estimation. In particular, observe the following equality by Bayes' rule \citep{bickel2009, gutmann2012noise}:
\begin{align}
\label{eq:bayes}
    M_t = f_t(\mathbf{S}_{[1:t]}) = \frac{\wrongdenst(\mathbf{S}_{[1:t]})}{\correctdenst(\mathbf{S}_{[1:t]})} = \frac{p(Y=0|\mathbf{S}_{[1:t]})}{p(Y=1|\mathbf{S}_{[1:t]})} \cdot \frac{p(Y = 1)}{p(Y = 0)}.
\end{align}
Thus, for each time step $t$, we train a classifier, $\hat{g}_t$, which takes $\mathbf{S}_{[1:t]}$ as input and provides an estimate of $p (Y=1|\mathbf{S}_{[1:t]})$.
We also form an estimate, $\hat{\pi}_1$, of the class probability $p(Y = 1)$ from the calibration set.
We plug these two estimates into Eq.~\eqref{eq:bayes} to form the following estimated density ratio at step $t$:
\begin{align*}
    M_t &= \hat{f}_t(\mathbf{S}_{[1:t]}) = \underbrace{\frac{1 - \hat{g}_t(\mathbf{S}_{[1:t]})}{\hat{g}_t(\mathbf{S}_{[1:t]})}}_{\text{classifier-based estimate}} \cdot \underbrace{\frac{\hat{\pi}_{1}}{1 - \hat{\pi}_{1}}}_{\text{prior odds estimate}}
\end{align*}
We then select the threshold $c_\alpha$ using the procedure in Proposition \ref{prop:pac}, which applies to \textit{any} scoring process with i.i.d. trajectories from the null available in the calibration set. In our experiments, we used simple logistic regression for the classifier at each step, $\hat{g}_t$, and found that estimated density ratios learned from a few hundred calibration points empirically achieved both false alarm rate control and superior power to alternative methods (see Appendix \ref{sec:cal_size}). 

\section{Related Work}

\textbf{Verifiers and PRMs.} As \textit{e-valuator} is a statistical wrapper for any verifier, our work is relevant to prior work on building better verifiers. These verifiers are often trained as models that estimate a reward (e.g., correctness) after each step in an agent's action sequence. Among these, process-reward models (PRMs) are finetuned using agent trajectories where each \textit{step} is labeled as correct or not \citep{uesato2022solving, wang2024math, khalifa2025process}. Some prior works also calibrate existing PRMs \citep{you2025probabilistic, park2026know}, although calibration alone is insufficient to control false alarm rates. Training PRMs can be expensive, as it requires (a) access to human-annotated process labels \citep{lightman2024let} and (b) finetuning existing LLMs \citep{wang2024math}. Alternative verifiers include LLM-as-a-judge \citep{bavaresco2024llms} and outcome reward models, which only provide a label for the entire trajectory \citep{creswellselection}. There are also several benchmarks to compare verifiers \citep{lu2025agentrewardbench,zheng2025processbench}.

\textbf{Sequential hypothesis testing and e-values.} E-valuator directly builds on prior work in e-values, which have useful properties for sequential hypothesis testing \citep{ramdas2024hypothesis, vovk2021values, vovk2023confidence}. E-values provide anytime validity over a (potentially infinite) sequence of tests \citep{wang2025anytime, waudby2024anytime, waudby2023distribution, grunwald2020safe}. They have found applications in many settings, including A/B testing \citep{johari2017peeking}, changepoint detection \citep{shin2023detectors, lorden1971procedures}, and others \citep{chen2025online, huang2025automated, dhillon2025scores}. 

\textbf{Hypothesis testing for AI monitoring.} Some prior works build hypothesis tests to monitor AI deployments, albeit in different contexts. \citet{vovk2021retrain} propose using conformal test martingales (CTMs) for continual monitoring of AI deployments, and \citet{prinsterwatch} develop weighted CTMs to enable test-time adaptation and analyze the cause of degradation. Safe anytime-valid testing has also been applied to track the risk of deployed models \citet{podkopaev2021tracking} and sequentially test if a classifier is fair \citep{chugg2023auditing}, among other applications \citep{i2024sequential,schirmer2026monitoring, timans2025continuous}. \citep{jang2022sequential} applies classifier two-sample tests \citep{lopez-paz2017revisiting} to detect covariate shifts. 
More broadly, several works model class-conditional densities of classifier scores and the corresponding density ratios for evaluation \citep{shanmugam2026evaluating, welinder2013lazy, dawid1979maximum, ji2020can}.

\textbf{Statistical guarantees on LLM outputs.} Another line of work tries to add formal statistical controls to LLMs. Conformal prediction methods \citep{shafer2008tutorial} quantify uncertainty in individual predictions while providing finite-sample and black-box guarantees in those uncertainties. Conformal prediction has been applied to resample LLM generations until a minimum quality requirement is satisfied \citep{quach2024conformal}. These ideas have been further applied to control the factuality of LLM outputs \citep{cherian2024large, mohri2024language,ye2024benchmarking}. More recently, \citet{wu2025thought} applies the learn-then-test framework \citep{angelopoulos2025learn} to calibrate a stopping rule for LLM reasoning traces, using white-box access to the LLM's internal logits.

\section{Experiments}

To empirically validate \textit{e-valuator}, we compare it to other baseline methods along three axes: (1) \textbf{false alarm rate control}: ``does e-valuator maintain the false alarm rate below the desired level, $\alpha$?'' (2) \textbf{power}: ``how often does e-valuator reject unsuccessful trajectories?'' and (3) \textbf{token savings}: ``among unsuccessful trajectories, how quickly does e-valuator reject, and (relatedly) how many tokens do we save via early rejection?'' These are common hypothesis testing metrics that matter in practice:  a user would like to catch as many unsuccessful trajectories as possible, and catch them early enough to save compute. Log-optimality (Section ~\ref{sec:logoptimal}) suggests that the density ratio statistic should accumulate evidence against $\mathcal{H}_A$ quickly, which should manifest empirically as both higher power (more total rejections) and shorter run lengths under $\mathcal{H}_A$ (earlier rejections).  We conducted our experiments across six datasets and tasks using three different agent-verifier combinations, each in a different setting.


\textbf{Agents and verifiers.} We conduct experiments on two tool-calling agents, Aviary \citep{narayanan2024aviary} and Octotools \citep{lu2025octotools}, and one step-by-step reasoning model, Claude Sonnet 4. For Aviary and Octotools, we use Claude Haiku 3.5 as the verifier, asking after each tool call for a text-based probability that the trajectory thus far is successful. For the reasoning model, we use a popular pretrained process-reward model \citep{wang2024math}, which provides a logit-based probability that each step in a reasoning trace is correct. 

\textbf{Datasets.} We experiment on six datasets across two domains: (1) \textbf{mathematical reasoning} (GSM8k \citep{cobbe2021training}, MATH \citep{hendrycks2measuring}, and AIME \citep{aime_1983_2024}), and (2) \textbf{question-answering} (HotpotQA \citep{yang2018hotpotqa}, MedQA \citep{jin2021disease}, and MMLU-Pro \citep{wang2024mmlu}). We present results from all datasets except GSM8k and AIME in the main section, and provide the GSM8k and AIME results in Appendix \ref{sec:more_results}. Results for each dataset are from one agent-verifier combination. A full description of the dataset, agent, and verifier combinations is available in Appendix \ref{sec:appdx:datasets}. Our verifiers span a wide range of quality (Appendix table ~\ref{tab:verifier_quality}).

\textbf{Baselines} We compare against four baselines inspired by sequential hypothesis testing and calibration.
\begin{enumerate}
    \item The \textbf{raw verifier} uses the scores from the verifier without modification.
    The verifier provides some prediction of $\text{Pr}(Y=1|H_t)$, the probability that the agent will produce a successful output, given the trajectory $H_t$ thus far.
    For a user-specified false positive rate $\alpha$, we reject a trajectory if the score $S_t$ ever drops below $\alpha$.

    \item The \textbf{calibrated verifier} uses the same verifier but recalibrates the scores $S_t$ using the calibration set $\mathcal{D}_{cal}$. 
    Specifically, we use isotonic regression to learn a $[0, 1]$-valued transformation of the score, $S' = \hat{f}(S) \in [0, 1]$, that achieves \textit{calibration}: $\text{Pr}(Y=1|S') = S'$.
    As with the raw verifier, for a user-specified false positive rate $\alpha$, we reject a trajectory if the score $S_t$ ever drops below $\alpha$.

    \item Like e-valuator, the \textbf{PAC verifier} uses the procedure from Algorithm \ref{alg:quantile}, setting $f_t(\mathbf{S}_{[1:t]}) = 1-S_t$ to account for the fact that \textit{smaller} $S_t$ from the verifier indicate ``worse" trajectories. We then find $c_\alpha$ with Algorithm \ref{alg:quantile} and use it as the threshold for rejection.

    \item \textbf{Randomized Ville} utilizes a modern variant of Ville's inequality \citep{ramdas2026randomized} to set the rejection threshold $c_\alpha$. Specifically, the threshold is $1/\alpha$ at all steps except the final step, where we threshold at $Z/\alpha$ (with $Z \sim \text{Unif}(0,1)$), allowing for more rejections at the end of a trajectory. We provide full details in Appendix~\ref{sec:randomized}. As with the standard Ville threshold, this procedure guarantees false-alarm-rate control only on exact test martingales; when the density ratios are estimated, false alarm rate control may not hold.
\end{enumerate}

 We compare these baselines to \textbf{e-valuator}, in which we choose the threshold using the procedure in Proposition ~\ref{prop:pac}. To meet a user-specified false alarm rate $\alpha$, we set $\alpha' = 0.9\cdot \alpha$ and $\delta = 0.1\cdot \alpha$ so that $\alpha' + \delta = \alpha$. We use the same $\alpha', \delta$ split for the PAC verifier.

For the results presented in the main section, we use an 80/20 split of test/calibration data: we calibrate our method (and baselines) on 20\% of the data and test on 80\%. We compare other splits of the calibration set in Appendix \ref{sec:cal_size} and find that a few hundred calibration trajectories is sufficient to achieve good false alarm rate control and power. We visualize the $M_t$ sequences for both successful and unsuccessful trajectories in Appendix \ref{sec:more_results}. Finally, we also ablate the density ratio estimator in Appendix~\ref{sec:dens_ratio_ablation}.

\begin{figure*}[t]
    \centering
    \includegraphics[width=0.9\textwidth]{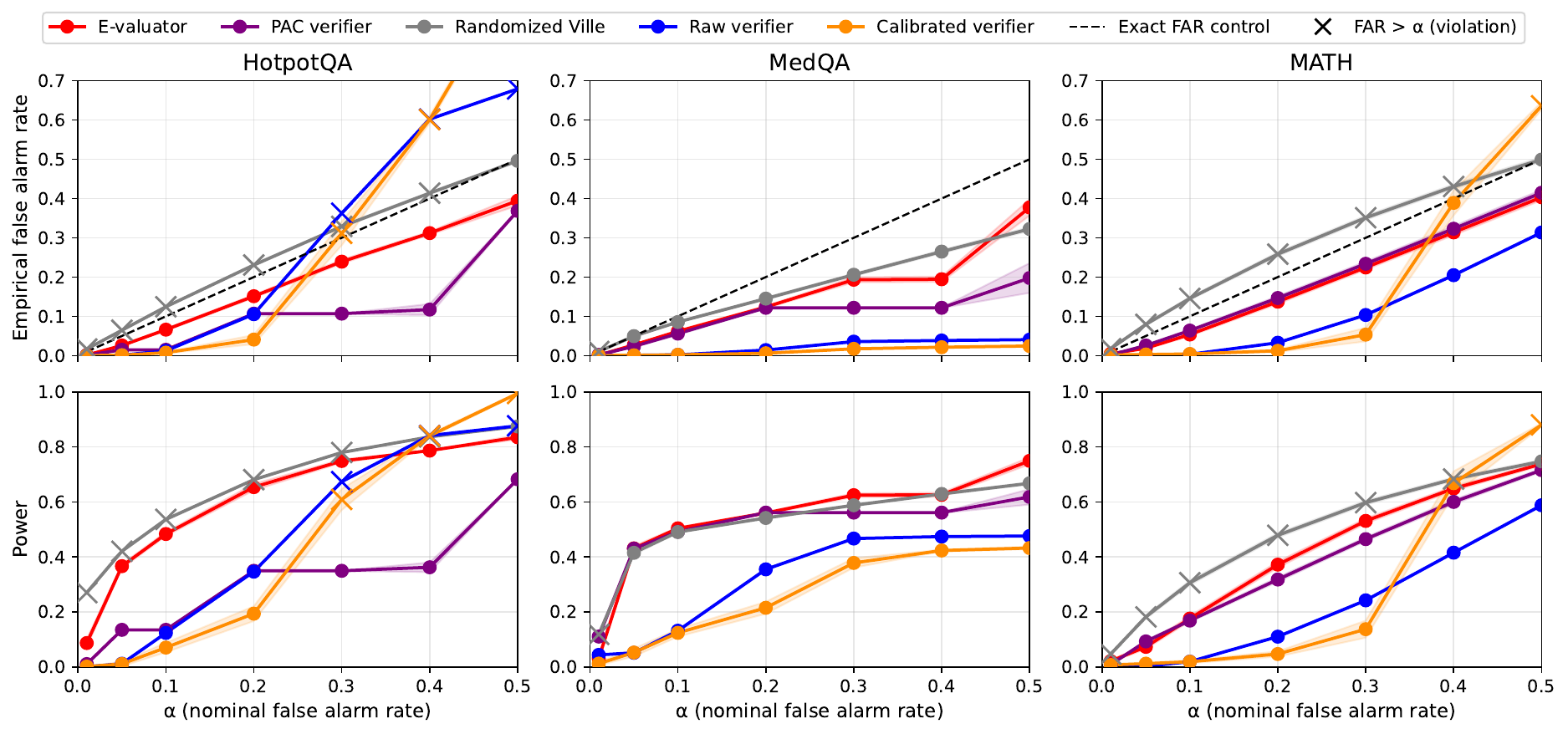}
    \caption{\textbf{E-valuator controls the false alarm rate and achieves higher power than alternative methods}. Violations of the false alarm rate control are marked with an X. Randomized Ville, the raw verifier, and the calibrated verifier occasionally provide comparable power, at the cost of inflating the false alarm rate. The PAC verifier successfully controls the false alarm rate but has consistently worse power than e-valuator across the three datasets. All plots show the 95\% CI over 50 random splits of each dataset.
    }
    \label{fig:type1}
\end{figure*}

\subsection{E-valuator provides better false alarm rate control than alternative methods}\label{sec:far}
We begin by analyzing the empirical false alarm rate achieved by \textit{e-valuator} and the competing baselines. Given a particular user-specified $\alpha$, the false alarm rate, or the rate at which we flag successful (``null") trajectories as unsuccessful (``alternative"), should be no greater than $\alpha$.

E-valuator empirically controls the false alarm rate across all choices of $\alpha$ and all datasets (Figure \ref{fig:type1}, top). The raw verifier sometimes achieves empirical false alarm rates less than the desired $\alpha$ (MedQA) but not always (HotpotQA, for $\alpha > 0.4$). The calibrated verifier, which applies isotonic regression to the raw verifier scores $S_t$ and then uses the same threshold $c_\alpha = \alpha$ on the recalibrated scores, does not achieve false alarm rates less than $\alpha$ either. Although calibration procedures such as isotonic regression aim to achieve $\text{E}(Y|S) = S$, this property does not have any direct implications on the false alarm rate. Furthermore, even if this property held at each timestep, it does not allow us to reason about the false alarm rate in the sequential hypothesis setting (see Appendix \ref{sec:theory_appdx} for further discussion).

Similarly, randomized Ville successfully controls the false alarm rate on one dataset (MedQA) but not others (HotpotQA, MATH). The false alarm rate violations are likely due to density ratio estimation error, as randomized Ville controls the false alarm rate only for exact test martingales, not necessarily approximate ones. The PAC verifier (an application of Algorithm \ref{alg:quantile} to the raw verifier scores) is indeed able to control the false alarm rate, as expected. However, it suffers in power, as we discuss below.

\subsection{E-valuator provides enhanced power over alternative methods}
Next we analyze the empirical power across the same datasets and tasks (Figure \ref{fig:type1}, bottom). That is, power is the rate at which unsuccessful trajectories (``alternative") are indeed flagged as unsuccessful. Across all datasets and all $\alpha$s, e-valuator achieves the highest power among methods that achieve empirical false alarm rates less than $\alpha$.

In those instances where the competing methods provide better power, it is at the cost of an inflated false alarm rate. For instance, in HotpotQA, at $\alpha=0.4$, the raw verifier provides a power of 0.84 but inflates the false alarm rate to 0.60. Similarly, randomized Ville has a power of 0.84 at the cost of a false alarm rate of 0.41.

In contrast,  \textit{e-valuator} provides a power of 0.78 \textit{and} controls the false alarm rate at 0.31. The PAC verifier successfully controls the false alarm rate (0.12) at the cost of substantially less power (0.36). The PAC verifier results confirm that PAC thresholding alone (without the density ratio statistic) is insufficient: it controls FAR but achieves lower power, demonstrating that log-optimality is useful for detecting unsuccessful trajectories effectively. These findings hold true across datasets. Similar plots comparing false alarm rate control and power are available in Appendix \ref{sec:more_results} for the MMLU-Pro, AIME, and GSM8k datasets.

\begin{figure*}[t]
    \centering
    \includegraphics[width=\textwidth]{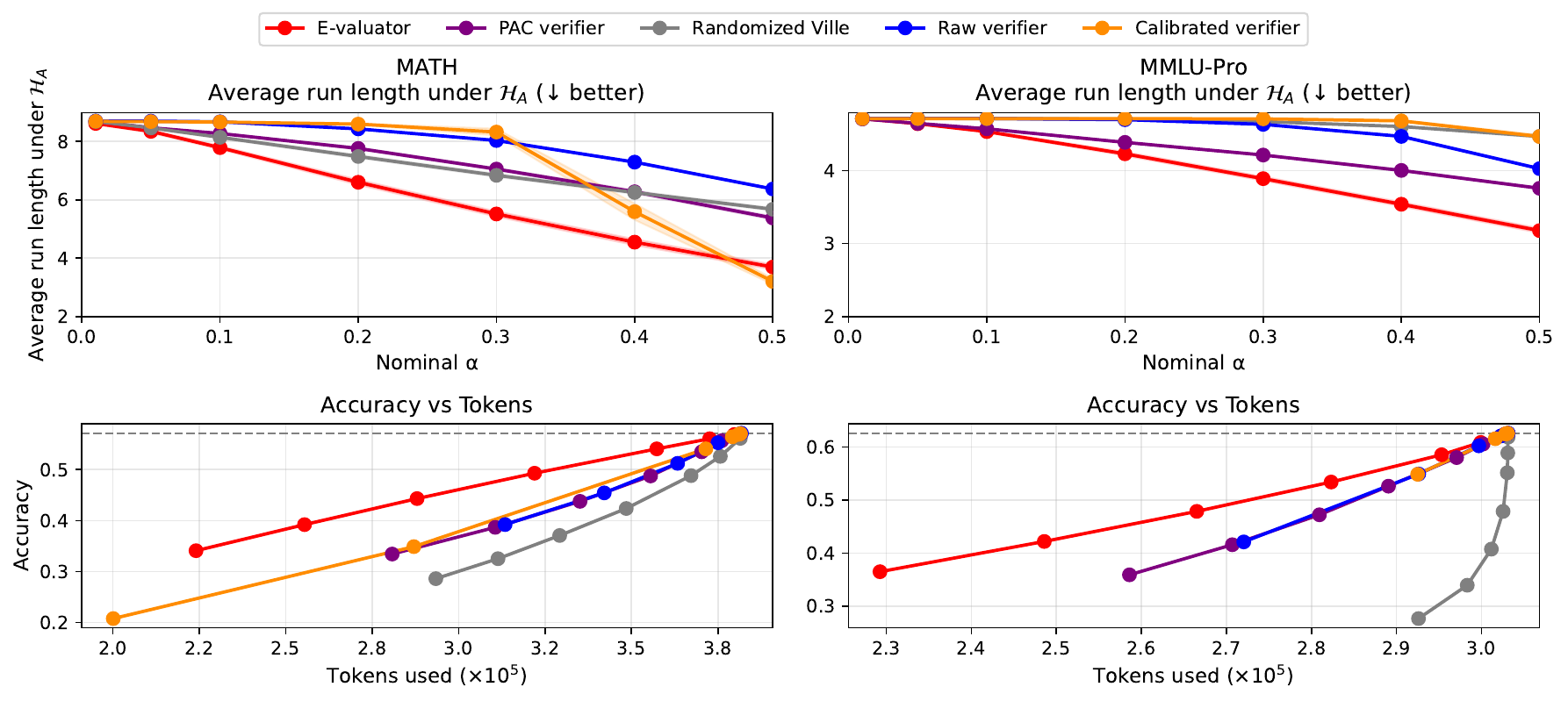}
    \caption{\textbf{E-valuator terminates unsuccessful trajectories earlier, providing better token-accuracy tradeoff.} (Top) Average run length under $\mathcal{H}_A$ ($\downarrow$ better): \textit{e-valuator} rejects unsuccessful trajectories in fewer steps (on average) than competing methods. (Bottom)  By terminating unsuccessful trajectories earlier, \textit{e-valuator} recovers a larger fraction of the original accuracy (dashed line) for a given token budget. Results shown on MATH and MMLU-Pro. False alarm rate violations are marked with an X.}
    \label{fig:tokens}
\end{figure*}

\subsection{E-valuator terminates unsuccessful trajectories earlier and recovers a larger fraction of original accuracy under a limited token budget}\label{sec:arl}
Finally, we evaluate how quickly each method detects unsuccessful trajectories. Whereas power compares the rate at which unsuccessful trajectories are rejected (at any point), in this section we compare how early each method rejects, as measured by the average run length, or average number of steps it takes to reject a trajectory, under $\mathcal{H}_A$. This is practically important: every additional step of an unsuccessful trajectory wastes tokens/compute.

\textit{E-valuator} is able to detect unsuccessful trajectories earlier (Fig \ref{fig:tokens}, row 1). For instance, at $\alpha=0.3$ on the MATH dataset, \textit{e-valuator} detects unsuccessful trajectories within 5.5 steps on average, whereas all competing baselines reject after at least 6.8 steps on average. Although the calibrated verifier achieves better average run length at $\alpha=0.5$ on MATH, it fails to control the false alarm rate (Fig. \ref{fig:type1}). Similar to power, the PAC verifier terminates unsuccessful trajectories later than \textit{e-valuator}, further confirming that the density ratio statistic (not just the PAC threshold alone) is important.

This results in a better tradeoff between token usage and accuracy (Fig \ref{fig:tokens}, row 2). We count the number of tokens that are saved (i.e., how many tokens would have otherwise been generated if the trajectory weren't terminated) and compare that to the total accuracy on the dataset. Note that using the full token budget is the ``maximal" accuracy achievable, as this would entail never terminating any trajectory.

Comparing \textit{e-valuator} to the verifier in terms of tokens saved versus total accuracy, we find that 
\textit{e-valuator} outperforms all competing baselines in recovering accuracy with fewer tokens.  For instance, on the MATH dataset, \textit{e-valuator} achieved 49\% total accuracy  (86\% of the original accuracy of 57\%) using 84\% of the original 381,804 tokens. By contrast, the raw, calibrated, and PAC verifiers each require more than 98\% of the original token count to recover comparable accuracy. Similarly, on the MMLU-Pro dataset, \textit{e-valuator} achieved 53\% of the total accuracy using just 282,316 tokens, whereas the raw, calibrated, and PAC verifiers each require more than 300,000 tokens.

\section{Discussion}
In this paper, we introduced \textit{e-valuator}, a method to improve any agent verifier model using sequential hypothesis testing. We convert the problem of detecting whether a trajectory is ``successful" or not into a hypothesis testing problem, where we distinguish between verifier scores generated from the ``successful" versus the ``unsuccessful" distribution. We introduce a PAC procedure to control the false alarm rate even with estimated density ratios.

There are promising directions for future work. First, one can relax certain assumptions to avoid estimating the \textit{full} joint density at each time $t$, such as assuming the verifier scores are i.i.d. across steps, which would allow universal inference algorithms \citep{wasserman2020universal} to construct exact e-processes from estimated density ratios. Second, one could use \textit{e-valuator} for more nuanced test-time scaling strategies than the ones explored here, such as resampling bad trajectories. Finally, \textit{e-valuator} can be used for other applications, such as early detection of unsafe or harmful trajectories.

\section{Code and Data}
Code is released on GitHub at \href{this link}{https://github.com/shuvom-s/e-valuator} and additionally as a Python package, \texttt{e-valuator}, available on PyPi at \href{this link}{https://pypi.org/project/e-valuator/}.

\section{Acknowledgments}
We thank Ian Waudby-Smith, Kexin Huang, Divya Shanmugam, Kyunghyun Cho, Manish Raghavan, the Recht Lab, Sidhika Balachandar, and Chang Ma for helpful feedback and discussions. This work was done while S.S. and D.P. were interns at Genentech. 

\bibliographystyle{unsrtnat}
\bibliography{example_paper}

\section{Appendix}

\subsection{Theory}\label{sec:theory_appdx}

\subsubsection{Proof of Proposition \ref{prop:pac}}\label{sec:proof_pac}

\newtheorem*{repproposition}{Proposition \ref{prop:pac} (PAC thresholding)}
\begin{repproposition}
    Let $\{M_t\}_{t \in \mathbb{N}_+}$ denote any scoring process where $M_t = f_t(\mathbf{S}_{[1:t]})$ for a sequence of (deterministic) functions $(f_t)_{t \in \mathbb{N}_+}$ and $M_t \in \mathbb{R}$.
    For fixed error level $\delta \in (0, 1)$ and quantile level $\alpha \in (0, 1)$, let $c_\alpha$ be the output of Algorithm~\ref{alg:quantile}.
    Then,
    \begin{align*}  \text{Pr}_{\mathcal{D}_{cal}} \left(\text{Pr}_{\nullh} \left(\exists \ t  \in [T]: M_t > c_\alpha \mid \mathcal{D}_\text{cal} \right) \leq \alpha \right) \geq 1 - \delta.
    \end{align*}
\end{repproposition}

\begin{proof}
    Suppose our calibration set contains $n$ successful trajectories that are i.i.d. according to the null.
    Note that $\text{Pr}_{\mathcal{H}_N} (\not\exists t : M_t > c_\alpha|\Dcal) = \text{Pr}_{\mathcal{H}_N} (\max_{t \in \mathbb{N}} M_t \leq c_\alpha|\Dcal)$, since the event that the entire sequence $(M_t)_{t=0}^\infty$ is below $c_\alpha$ is equivalent to the event that the maximum is below $c_\alpha$.
    Thus, it suffices to consider the maximum score over all steps.
    
    Compute the calibration set maxima $M^{(1)}, \ldots, M^{(n)}$. 
    Note that these maxima, $M^{(i)}, i = 1, \ldots, n$ are i.i.d. from some distribution with (unknown) CDF, $F$.
    Define the $(1 - \alpha)$-quantile of this distribution as
    \begin{align*}
        q_{1 - \alpha} := \inf \{x \in \mathbb{R}: F(x) \geq 1- \alpha\},
    \end{align*}
    such that $F(q_{1 - \alpha}-) \leq 1 - \alpha \leq F(q_{1 - \alpha})$, where $F(q-) := \lim_{x \rightarrow q^-} F(x)$.
    Our goal is to use the calibration data to construct a $(1 - \delta)$-confidence upper bound on $q_{1 - \alpha}$.
    That is, we will find $c_\alpha$ such that
    \begin{align}
    \label{eq:quantile-ub}
        \text{Pr}_{\Dcal}(c_\alpha < q_{1-\alpha}) \leq \delta.
    \end{align}
    Then, on the event $\{c_\alpha \geq q_{1 - \alpha}\}$, we have that 
     \begin{align}
        \text{Pr}_{\mathcal{H}_N} \left(\max_{t \in \mathbb{N}} M_t > c_\alpha \mid \Dcal \right) \leq \text{Pr}_{\mathcal{H}_N} \left (\max_{t \in \mathbb{N}} M_t > q_{1 - \alpha} \,\,\big|\,\, \Dcal \right) = 1 - F(q_{1 - \alpha})  \leq  1 - (1 - \alpha) = \alpha.
    \end{align}
    Since by construction, the event $\{c_\alpha \geq q_{1 - \alpha}\}$ occurs with probability at least $1 - \delta$, we have
    \begin{align}
        \text{Pr}_{\Dcal} \left(\text{Pr}_{\mathcal{H}_N} \left(\exists t : M_t > c_\alpha | \Dcal \right) \leq \alpha \right) = \text{Pr}_{\Dcal}\left(\text{Pr}_{\mathcal{H}_N} \left(\max_{t \in \mathbb{N}} M_t > c_\alpha \,\,\big|\,\, \Dcal \right) \leq \alpha \right) \geq 1 - \delta
    \end{align}
    as desired.
    
    We construct a $(1 - \delta)$-confidence upper bound on $q_{1 - \alpha}$ using the following argument, which follows ideas from \citet{clopper1934use}.
    Let $M_{(1)} \leq M_{(2)} \leq \cdots \leq M_{(n)}$ denote the order statistics of the calibration score maxima, where ties occupy successive ranks.
    We will set $c_\alpha$ to be one of these order statistics, as follows.
    Denote $K(x) := \# \{i : M^{(i)} < x\}$. 
    For any $x \in \mathbb{R}$,  $K(x) \sim \text{Binomial}(n, F(x-))$. Then for any $k$ and $x$, the events $\{M_{(k)} < x\}$ and $\{K(x) \geq k\}$ are equivalent, so
    \begin{align}
        \text{Pr}_{\Dcal}(M_{(k)} < x) = \text{Pr}_{\Dcal}(K(x) \geq k) = \text{Pr}(\text{Binomial}(n, F(x-)) \geq k).
    \end{align}
    In particular, for any $k$, at the quantile, $q_{1 - \alpha}$ we have
    \begin{align}
        \text{Pr}_{\Dcal}(M_{(k)} < q_{1 - \alpha}) = \text{Pr}(\text{Binomial}(n, F(q_{1 - \alpha}-)) \geq k)  \leq \text{Pr}(\text{Binomial}(n, 1 - \alpha) \geq k),
    \end{align}
    where the inequality holds because $F(q_{1 - \alpha}-) \leq 1 - \alpha$ and $\text{Pr}(\text{Binomial}(n, q) \geq k$ is nondecreasing in $q$.
    Therefore, Algorithm~\ref{alg:quantile} chooses $k^\star = \text{min} \{k: \text{Pr}(\text{Binomial}(n, 1-\alpha) \geq k) \leq \delta\}$ and sets $c_\alpha = M_{(k^\star)}$.
    This is the smallest order statistic such that Eq.~\eqref{eq:quantile-ub} holds.
    That is, by construction, $c_\alpha$ satisfies
    \begin{align}
    \label{eq:binomial}
        \text{Pr}_{\Dcal}(c_\alpha < q_{1-\alpha}) = \text{Pr}_{\Dcal}(M_{(k^\star)} < q_{1 - \alpha}) \leq \text{Pr}(\text{Binomial}(n, 1-\alpha) \geq k^\star) \leq \delta,
    \end{align}
    as desired.






\end{proof}

\subsubsection{Discussion of calibration sample size}
Algorithm \ref{alg:quantile} chooses an index $k$ by selecting $\min\{j \in [n] : \Pr[\text{Bin}(n, 1-\alpha) \geq j] \leq \delta\}$. If $n$ is not sufficiently large, this set may be empty. Since $\Pr[\text{Bin}(n, 1-\alpha) \geq j]$ is minimized at $j = n$ with value $(1-\alpha)^n$, the set is empty if and only if $(1-\alpha)^n > \delta$, or equivalently $n < \log \delta / \log(1-\alpha)$. In this regime, Algorithm \ref{alg:quantile} sets $c_\alpha = \infty$, in which case the procedure trivially controls the false alarm rate but never rejects.

Thus, given a calibration set of size $n$, the user should select $\delta, \alpha$ such that $n < \log \delta / \log(1-\alpha)$ to obtain non-zero power.

\newpage

\subsubsection{Proof of log-optimality of density ratio process}\label{sec:appdx:logopt}
\begin{proposition}[Log-optimality of density ratio statistic]
    Let $p_0$ and $p_1$ be the alternative and null densities, respectively. The density ratio process given by $M_t = \frac{\wrongdenst(\mathbf{S}_{[1:t]})}{\correctdenst(\mathbf{S}_{[1:t]})}$ provides log-optimal growth rate. That is, for any other e-process $(M_t')_{t=1}^\infty$ and stopping time $\tau$, $E_{\mathcal{H}_A}[\log M_\tau] \geq E_{\mathcal{H}_A} [\log M_\tau']$.
\end{proposition}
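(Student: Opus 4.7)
The plan is to reduce the comparison $E_{\alth}[\log M_\tau] \geq E_{\alth}[\log M_\tau']$ to the defining property of an e-process under $\nullh$, namely $E_{\nullh}[M_\tau'] \leq 1$, by a change-of-measure argument combined with Jensen's inequality. This is essentially a sequential analog of the classical KL-divergence proof of the Neyman--Pearson optimality of the likelihood ratio.

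First I would invoke the e-process property at the stopping time. By definition, there exists a test martingale $(N_t)$ for $\nullh$ dominating $(M_t')$, and applying Doob's optional stopping to $(N_t)$ (at $\tau \wedge n$, then letting $n \to \infty$ via Fatou) yields $E_{\nullh}[M_\tau'] \leq E_{\nullh}[N_\tau] \leq 1$. Second, I would write
\begin{equation*}
E_{\alth}[\log M_\tau'] - E_{\alth}[\log M_\tau] = E_{\alth}\!\left[\log \frac{M_\tau'}{M_\tau}\right],
\end{equation*}
and apply Jensen's inequality to the concave function $\log$, obtaining
\begin{equation*}
E_{\alth}\!\left[\log \frac{M_\tau'}{M_\tau}\right] \leq \log E_{\alth}\!\left[\frac{M_\tau'}{M_\tau}\right].
\end{equation*}
Third, I would perform the change of measure. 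Since $M_\tau = \wrongdenst(\mathbf{S}_{[1:\tau]})/\correctdenst(\mathbf{S}_{[1:\tau]})$ is precisely the Radon--Nikodym derivative $dP_0/dP_1$ restricted to the sigma-algebra at time $\tau$, we have
\begin{equation*}
E_{\alth}\!\left[\frac{M_\tau'}{M_\tau}\right] = E_{P_1}\!\left[\frac{M_\tau'}{M_\tau} \cdot M_\tau\right] = E_{\nullh}[M_\tau'] \leq 1,
\end{equation*}
and hence the right-hand side of the Jensen bound is $\leq \log 1 = 0$. Rearranging yields the claim.

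The main obstacle I anticipate is making the measure-theoretic details rigorous: in particular, justifying optional stopping for the dominating test martingale when $\tau$ may be unbounded, and handling the density-ratio inversion on events where $M_\tau$ vanishes. The standard remedy is to assume $P_0 \ll P_1$ on the filtration generated by $\mathbf{S}$, so that $M_\tau > 0$ almost surely under both hypotheses, and to truncate $\tau$ before passing to the limit. A secondary nuance is interpreting $\log M_\tau'$ when $M_\tau'$ can equal zero; the standard convention $\log 0 = -\infty$ only strengthens the inequality, so no special care is required there. Otherwise, the argument is essentially the three lines above---e-process bound, Jensen, change of measure---and I would relegate the integrability bookkeeping to the appendix.
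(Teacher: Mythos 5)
Your proposal is correct, and it is a more direct and self-contained route than the one the paper takes. The paper only proves the fixed-sample (non-sequential) statement---it argues that any e-variable for $\correctdist$ is dominated by one of the form $dQ/d\correctdist$ and then invokes nonnegativity of $\mathrm{KL}(\wrongdist \,\|\, Q)$---and defers the entire sequential extension to Theorem 7.11 of the cited textbook. You instead prove the stopped statement in one pass: optional stopping on the dominating test martingale gives $E_{\nullh}[M_\tau'] \leq 1$, Jensen bounds $E_{\alth}[\log(M_\tau'/M_\tau)]$ by $\log E_{\alth}[M_\tau'/M_\tau]$, and Wald's likelihood-ratio identity converts that expectation into $E_{\nullh}[M_\tau'] \leq 1$. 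The two arguments share the same analytic core (Gibbs' inequality is Jensen applied to $\log$), but your decomposition replaces the paper's ``reduce to sub-probability densities'' lemma with the change-of-measure step, and it handles the stopping time explicitly rather than by citation; what it costs you is the measure-theoretic bookkeeping you correctly flag (local absolute continuity $\wrongdist \ll \correctdist$ so that $M_\tau > 0$ a.s.\ under $\alth$, validity of the likelihood-ratio identity at $\tau$, and well-definedness of the difference of expectations when one of them is infinite---all benign here since $\tau \leq T < \infty$). Note also that the paper's own displayed computation has an inverted fraction inside the logarithm (it writes $\log(E/E')$ but integrates $\log(q/\wrongdens) = \log(E'/E)$); your version does not have this issue.
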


\begin{proof}
A full proof of this proposition appears as Theorem 7.11 in \citep{ramdas2024hypothesis}. We refer the reader to that textbook for a rigorous treatment of the proof.

First, note that $(M_t)_{t=1}^\infty$ is indeed a test martingale (and thus also an e-process). Let $\mathcal{F}_t = \sigma(\mathbf{S}_{[1:t]})$ be the natural filtration.
    We show that $(M_t)_{t \in \mathbb{N}}$ satisfies the definition of a test martingale for $\nullh: \mathbf{S} \sim P_1$.
    First, note that density ratios are always non-negative, so $M_t$ is always non-negative.
    Also, since by construction $M_0 = 1$, we have $\mathbb{E}_{\mathcal{H}_N}[M_0] = 1$.
    We now show that $(M_t)_{t \in \mathbb{N}}$ is a martingale under $\mathcal{H}_N$, that is, when $\mathbf{S} \sim P_1$.
    We have

    \begin{align*}
    \mathbb{E}_{\mathcal{H}_N}[M_t \mid \mathcal{F}_{t-1}]
    &= \mathbb{E}_{\mathcal{H}_N} \!\left[ \frac{\wrongdenst(\mathbf{S}_{[1:t]})}{\correctdenst(\mathbf{S}_{[1:t]})} \,\Big|\, \mathcal{F}_{t-1} \right] \\
    &= \frac{p_0(\mathbf{S}_{[1:t-1]})}{p_1(\mathbf{S}_{[1:t-1]})}
       \int \frac{\wrongdens(s_t \mid \mathbf{S}_{[1:t-1]})}{\correctdens(s_t \mid \mathbf{S}_{[1:t-1]})}
             \correctdens(s_t \mid \ \mathbf{S}_{[1:t-1]})\, ds_t \\
    &= M_{t-1} \int \wrongdens(s_t \mid \mathbf{S}_{[1:t-1]}) \, ds_t \\
    & = M_{t - 1}
\end{align*}
where the third equality holds because the integral of a density is $1$.
 Thus, the process $(M_t)_{t \in \mathbb{N}}$ is a test martingale for the null hypothesis.

 The main log-optimality result is an extension of the fact that in a non-sequential hypothesis test of $\correctdist$ against $\wrongdist$, where $\wrongdist \ll \correctdist$,  the likelihood ratio, $E = \wrongdens / \correctdens$, is the log-optimal e-variable: $\mathbb{E}_{\wrongdist}[\log E'] \leq \mathbb{E}_{\wrongdist}[\log E]$ for any other e-variable $E'$ for $\correctdist$.
    To see this, first note that it suffices to consider e-variables of the form $E' = dQ / d \correctdist$ for distributions $Q$ such that $Q \ll \wrongdist \ll \correctdist$.
    We have that
    \begin{align}
        \mathbb{E}_{\mathcal{H}_A}\left[\log \frac{E'}{E} \right] = \int \wrongdens(x) \log\left( \frac{q(x) / \correctdens(x)}{\wrongdens(x) / \correctdens(x)}\right) L(dx) = - \int \wrongdens(x) \log\left( \frac{\wrongdens(x)}{q(x)}\right) L(dx) \leq 0,
    \end{align}
    where $L$ is the reference measure that the densities $\correctdens, \wrongdens, q$ are defined with respect to.
    That is, $\mathbb{E}_{\mathcal{H}_A}[\log E'] \leq \mathbb{E}_{\mathcal{H}_A}[\log E]$.

    From here, one can extend this statement into the sequential setting of e-processes, which is done in Theorem 7.11 of the reference.
\end{proof}











\newpage

\subsubsection{Density ratio estimation procedure}

\begin{algorithm}[h]
\caption{Density ratio estimation using calibration data.}
\label{alg:dre}
\begin{algorithmic}[1]
   \STATE \textbf{Inputs:} calibration data $\mathcal{D}_\text{cal}$.
   \STATE \textbf{Output:} functions that estimate the density ratio for every step, $\{\hat{M}_t\}_{t \in \mathbb{N}_+}$.

   \STATE Split $\mathcal{D}_\text{cal}$ randomly, where $\mathcal{D}_\text{DRE} \cup \mathcal{D}_\text{threshold} = \mathcal{D}_\text{cal}$.
   
   \STATE $\hat{\pi}_1 \leftarrow \frac{1}{|\mathcal{D}_{\text{DRE}}|} \sum_{(\mathbf{S}, Y) \in \mathcal{D}_\text{DRE}} Y$ \COMMENT{Estimate class priors.}

   \STATE $T_\text{max} \leftarrow \max \{ T : Y \in \{0,1\}  \text{ both in } \mathcal{D}_\text{DRE} \text{ at len } T \}$

   \FOR{$t = 1, \ldots, T_\text{max}$}
       \STATE $\mathcal{D}_{\text{DRE}, t} \leftarrow \{(\mathbf{S}_{[1:t]}, Y): (\mathbf{S}, Y) \in \mathcal{D}_{\text{DRE}}: |\mathbf{S}| \geq t\}$
       \STATE Use $\mathcal{D}_{\text{DRE}, t}$ to train probabilistic classifier $\hat{g}_t$ that takes $\mathbf{S}_{[1:t]}$ as input and predicts $p(Y = 1 \mid \mathbf{S}_{[1:t]})$.
       \STATE $\hat{M}_t(\cdot) \leftarrow \frac{1 - \hat{g}_t(\cdot)}{\hat{g}_t(\cdot)} \frac{\hat{\pi}_1}{1 - \hat{\pi}_1}$
   \ENDFOR

   \STATE Set $\hat{M}_t(\cdot) \leftarrow \hat{M}_{T_\text{max}}(\cdot)$ for all $t > T_\text{max}$.
\end{algorithmic}
\end{algorithm}

Algorithm~\ref{alg:dre} specifies our density ratio estimation procedure. Note that we train a separate density ratio estimator per timestep $t$.

\newpage
\subsubsection{Baseline: Randomized Ville}\label{sec:randomized}

Prior work \cite{ramdas2026randomized} provides an alternative procedure for controlling FAR while increasing power in sequential testing using a \textit{randomized} variant of Ville's inequality, which motivates our use of it as a baseline. We refer to their paper for a detailed treatment of the procedure but provide the main result here for the sake of clarity.

\begin{proposition} (Randomized Ville's inequality)
    Let $(M_t)_{t \geq 0}$ be any non-negative supermartingale with respect to a filtration $\mathbb{F} = (\mathcal{F}_t)_{t \geq 0}$. Let $E[M_0] \leq 1$. Then, for any $\mathbb{F}$-stopping time $\tau$:
    $$\text{Pr}\left[\exists t < \tau : M_t \geq \frac{1}{\alpha} \text{ OR } M_\tau \geq \frac{Z}{\alpha}\right] \leq \alpha$$
    where $Z$ is super-uniform on $[0,1]$ and independent of $\mathbb{F}$ and $\alpha \in (0,1)$.
\end{proposition}
\begin{proof}
    See ~\citep{ramdas2026randomized} for the proof.
\end{proof}

The implication of the above result is that for a (perfectly estimated) density ratio process $M_t = \frac{p_0(\mathbf{S}_{[1:t]})}{p_1(\mathbf{S}_{[1:t]})}$ (see Eq.~\ref{eq:dens_ratio}), one can control FAR \textit{and} terminate by time $T$ (the end of the trajectory) by \textit{sampling} the threshold $Z$ at any $\mathbb{F}$-adapted stopping time $\tau$. $Z$ can be sampled from any super-uniform distribution on $[0,1]$, i.e., $Z$ can be sampled from any distribution whose CDF satisfies $P(Z \leq u) \leq u$ for all $u \in [0,1]$. Furthermore, the test \textit{ends} after $\tau$ and the user may not continue testing after this point.

We experimented with this approach in our randomized Ville baseline, setting the stopping time $\tau$ as the final step $T$ in the trajectory, and sampling $Z \sim \text{Unif}(0,1)$.

\newpage

\subsubsection{Toy example: Marginal calibration does not control false alarm rate}
A verifier score, $S$, satisfies \textit{marginal calibration} if
\begin{align}
    p(Y=1 \mid S) = S
\end{align}
almost surely.
Although a commonly used notion of probabilistic ``correctness", marginal calibration does not enable control of the false alarm rate, as it does not account for the base rate of the null and alternative (i.e., $p(Y=1)$ and $p(Y=0)$).
This is the case in the sequential hypothesis setting, where marginal calibration of $S_t$ at each step $t$ does not enable anytime control of the false alarm rate, but it is also the case in the simple non-sequential setting, as illustrated by the following example.

Suppose we have a verifier score, $S \in [0, 1]$, that only takes on two values: $S \in \{0.005, 0.5\}$, with $p(S=0.005) = 0.99$ and $p(S=0.5) = 0.01$.
The verifier score is marginally calibrated, so $p(Y=1 \mid s=0.005) = 0.005$ and $p(Y=1 \mid s=0.5) = 0.5$.
As a naive attempt to control the false alarm rate, such that $p(\texttt{reject} \mid Y=1) \le \alpha = 0.01$, we decide to \texttt{reject} whenever $S \le \alpha = 0.01$.
We now calculate the resulting false alarm rate.

First, the base rate of the null is
\begin{align*}
p(Y=1) &= p(Y=1 \mid S=0.005) \cdot p(S=0.005) + p(Y=1 \mid S=0.5) \cdot p(S=0.5)\\
&= 0.005  \cdot 0.99 + 0.5 \cdot 0.01 \\
&= 0.00995.
\end{align*}
The false alarm rate is $p(\text{reject} \mid Y=1)$, which is equivalent to $p(S=0.005 \mid Y=1)$ since we reject for $S \leq 0.01$. However,
\begin{align*}
p(\text{reject} \mid Y=1)
&= p(S = 0.005 \mid Y=1) \\
&= \frac{p(Y=1 \mid S=0.005) \, p(s=0.005)}{p(Y=1)} \\
&= \frac{0.005 \cdot 0.99}{0.00995} \\
&\approx 0.50 \gg 0.01.
\end{align*}
Thus, even with a marginally calibrated verifier score, $S$, rejecting the null when $p(Y=1 \mid S) \leq \alpha$ does not control the false alarm rate.

\newpage

\subsection{Additional results}\label{sec:more_results}
\begin{figure*}[t]
    \centering
    \includegraphics[width=\textwidth]{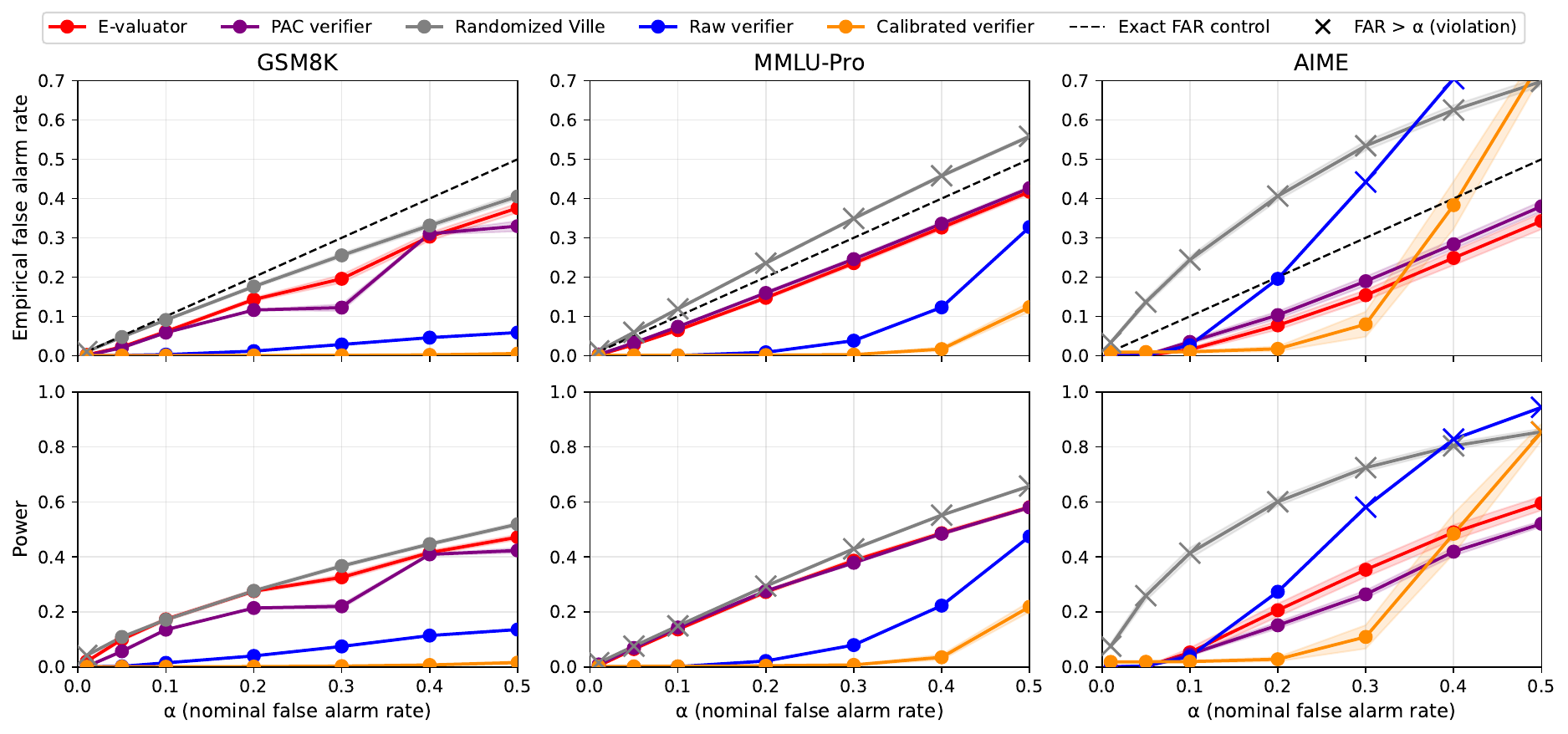}
    \caption{\textbf{GSM8k, MMLU-Pro, and AIME results}. The false alarm rate is empirically controlled for \textit{e-valuator} and the PAC verifier. Additionally, \textit{e-valuator} achieves greater (or equal) power among methods that are able to control the false alarm rate.
    }
    \label{fig:addl_far_power}
\end{figure*}
\subsubsection{Additional false alarm rate and power results}
We provide false alarm rate and power results from three  additional datasets not presented in the main text (Section ~\ref{sec:far}): GSM8k \citep{cobbe2021training},  MMLU-Pro \citep{wang2024mmlu}, and AIME \citep{aime_1983_2024} (Figure \ref{fig:addl_far_power}). \textit{E-valuator} empirically controls the false alarm rate for all choices of $\alpha$. The PAC verifier also controls the false alarm rate; however, it has worse (or equal) power on the datasets shown. Randomized Ville is able to control the false alarm rate on GSM8k but not on MMLU-Pro and AIME.

\newpage

\begin{figure*}[t]
    \centering
    \includegraphics[width=\textwidth]{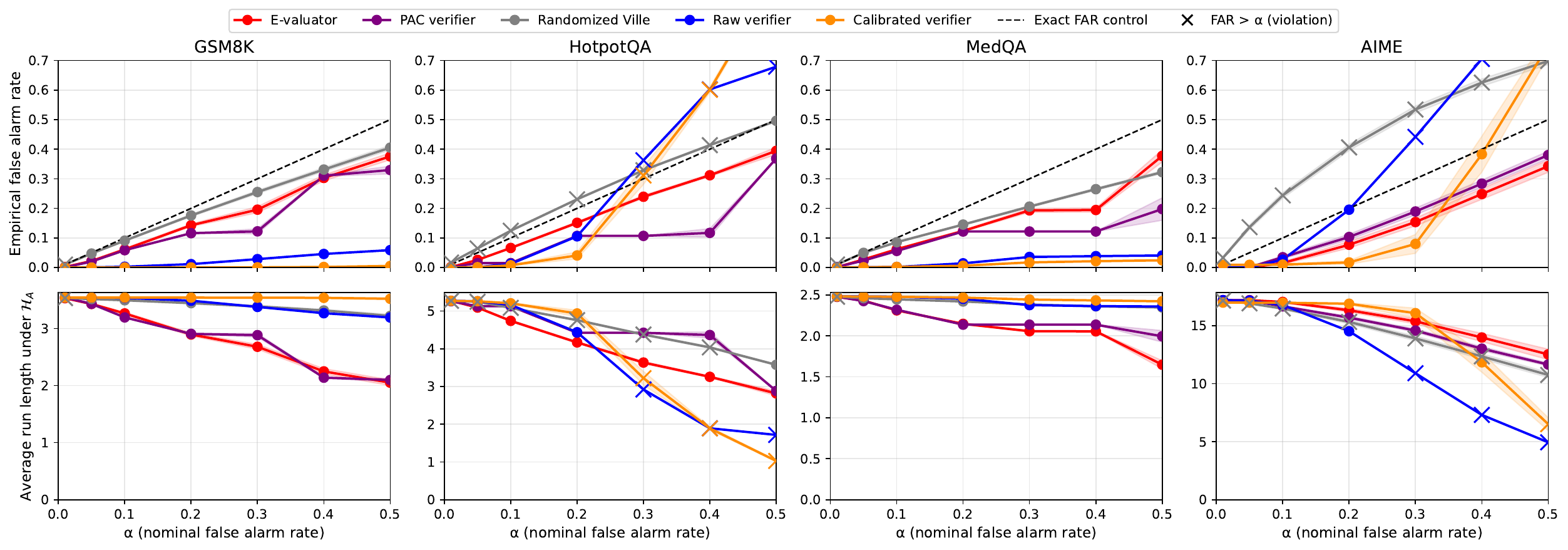}
    \caption{\textbf{Average run length results}. The average run length for unsuccessful trajectories is generally lower for e-valuator than other methods, indicating that it is able to detect unsuccessful trajectories earlier. Violations of FAR control are marked with an X.}
    \label{fig:addl_arl}
\end{figure*}

\subsubsection{Additional average run length under alternative results}
We provide average run length under alternative results from four additional datasets not presented in the main text: GSM8k \citep{cobbe2021training},  HotpotQA \citep{yang2018hotpotqa}, MedQA \citep{jin2021disease}, and AIME \citep{aime_1983_2024} (Figure~\ref{fig:addl_arl}). E-valuator provides better average run length than the PAC verifier in all but one dataset (AIME), on which it has favorable power (Figure ~\ref{fig:addl_far_power}). For instance, for HotpotQA, at $\alpha=0.4$, e-valuator detects unsuccessful trajectories within 3.2 steps on average, whereas the PAC verifier uses 4.3 steps. The raw verifier and randomized Ville baselines do not control FAR in general.

\newpage
\subsubsection{Ablations of calibration set size}\label{sec:cal_size}
\begin{figure*}[t]
    \centering
    \includegraphics[width=0.85\textwidth]{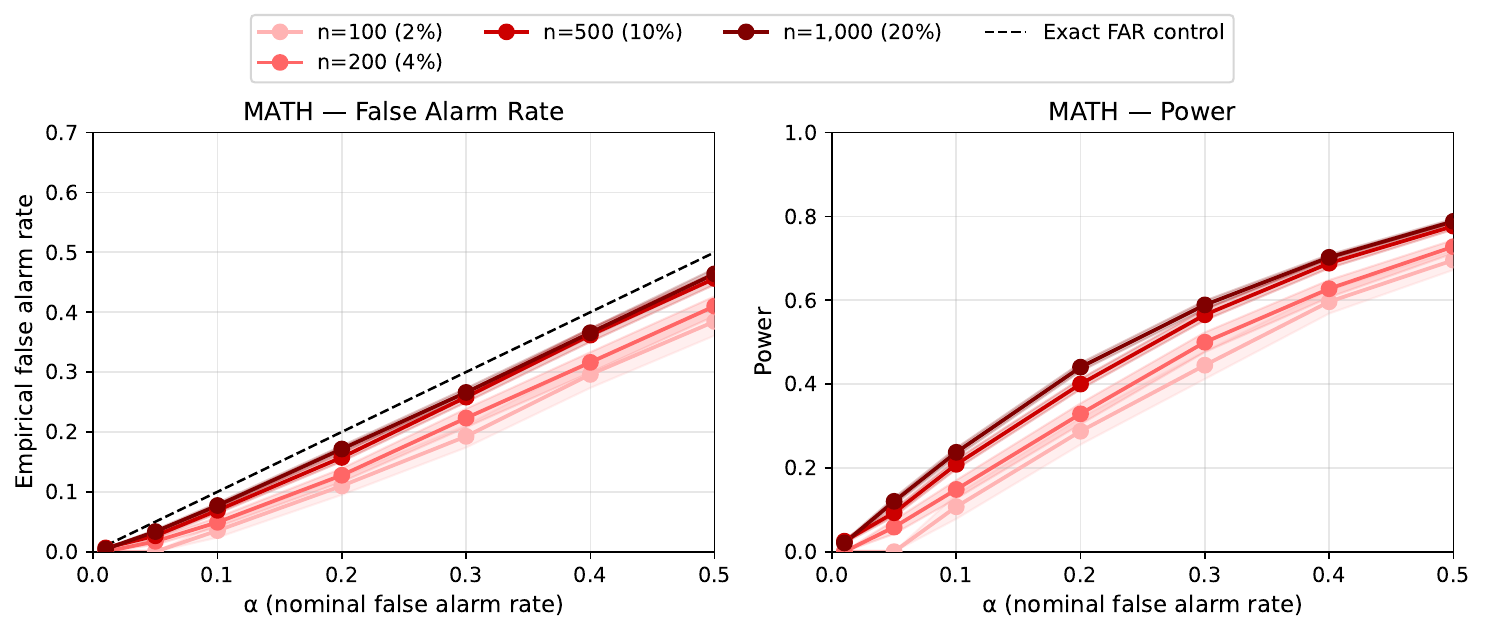}
    \caption{\textbf{Calibration set size}. On the MATH dataset, the false alarm rate is empirically controlled for all calibration set sizes. E-valuator tends to be more conservative for very small $n$ and $\alpha$, as expected.
    }
    \label{fig:cal_size}
\end{figure*}
We examine the effect of the size of the calibration set on \textit{e-valuator}. Recall that we split $\Dcal$ into $\mathcal{D}_{\text{DRE}}$ and $\mathcal{D}_{\text{threshold}}$, learning the density ratios on the former split, and estimating the rejection threshold on the latter. Because our density ratios are learned, we expect these to ratios to be more accurate as the calibration set size increases.

We run this ablation on the MATH dataset, which has 5000 total trajectories. As shown in Figure \ref{fig:cal_size}, the size of the calibration set has little effect on the empirical false alarm rates and power. However, at very small amounts of calibration data (2\%, or 100 labeled trajectories), the density ratios tend to be noisier, leading to greater variance in the false alarm rate and power.

We observe that the false alarm rates remain similar as the calibration set size increases (and all sizes control the false alarm rate).

\newpage
\subsubsection{Ablations of density ratio estimators}\label{sec:dens_ratio_ablation}
\begin{figure*}[t]
    \centering
    \includegraphics[width=\textwidth]{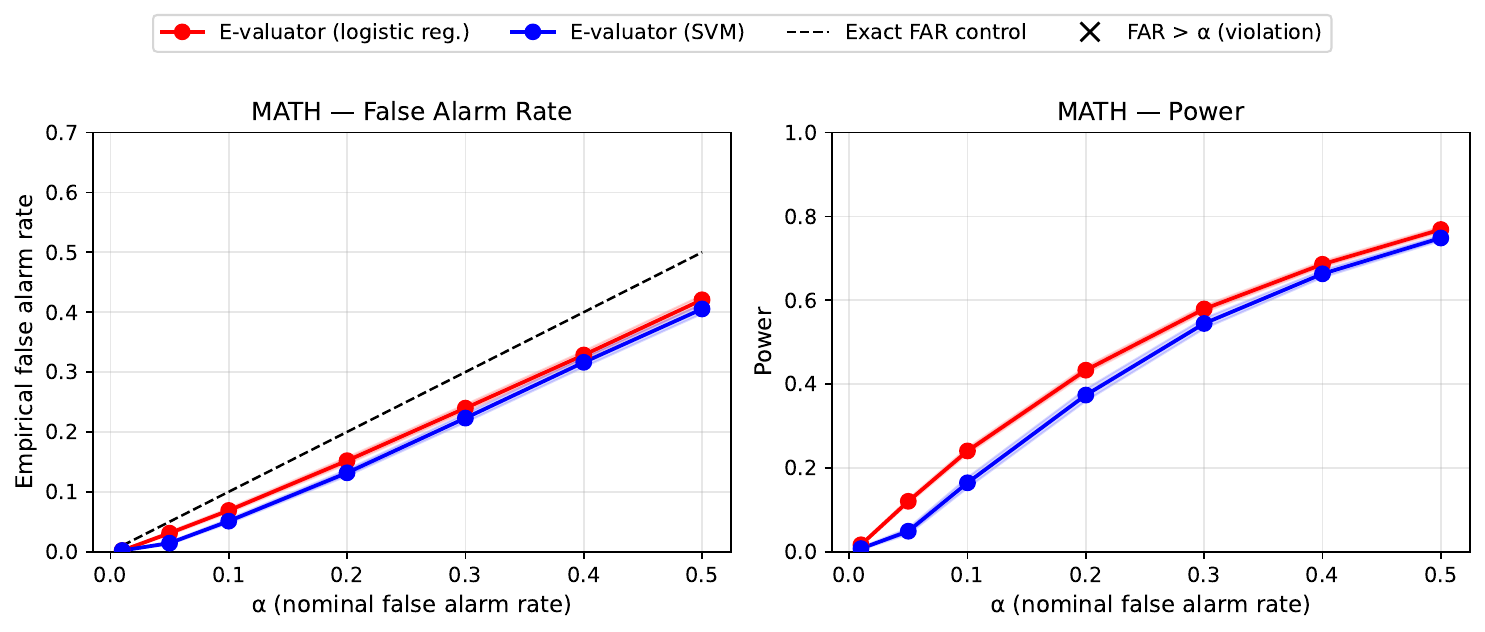}
    \caption{\textbf{Density ratio estimator}. Changing the density ratio estimator from logistic regression to SVM does not substantially change the results on the MATH dataset. False alarm rate control and power track closely across both density ratio estimation methods.
    }
    \label{fig:dre_ablation}
\end{figure*}

We examine the effect of the density ratio estimation method on \textit{e-valuator}. Here we use a support vector machine instead of logistic regression as the density ratio estimation method. On the MATH dataset, we find that the false rate and power track closely across both methods (Figure ~\ref{fig:dre_ablation}).

\newpage
\subsubsection{Example $M_t$ sequences}
\begin{figure*}[t]
    \centering
    \includegraphics[width=\textwidth]{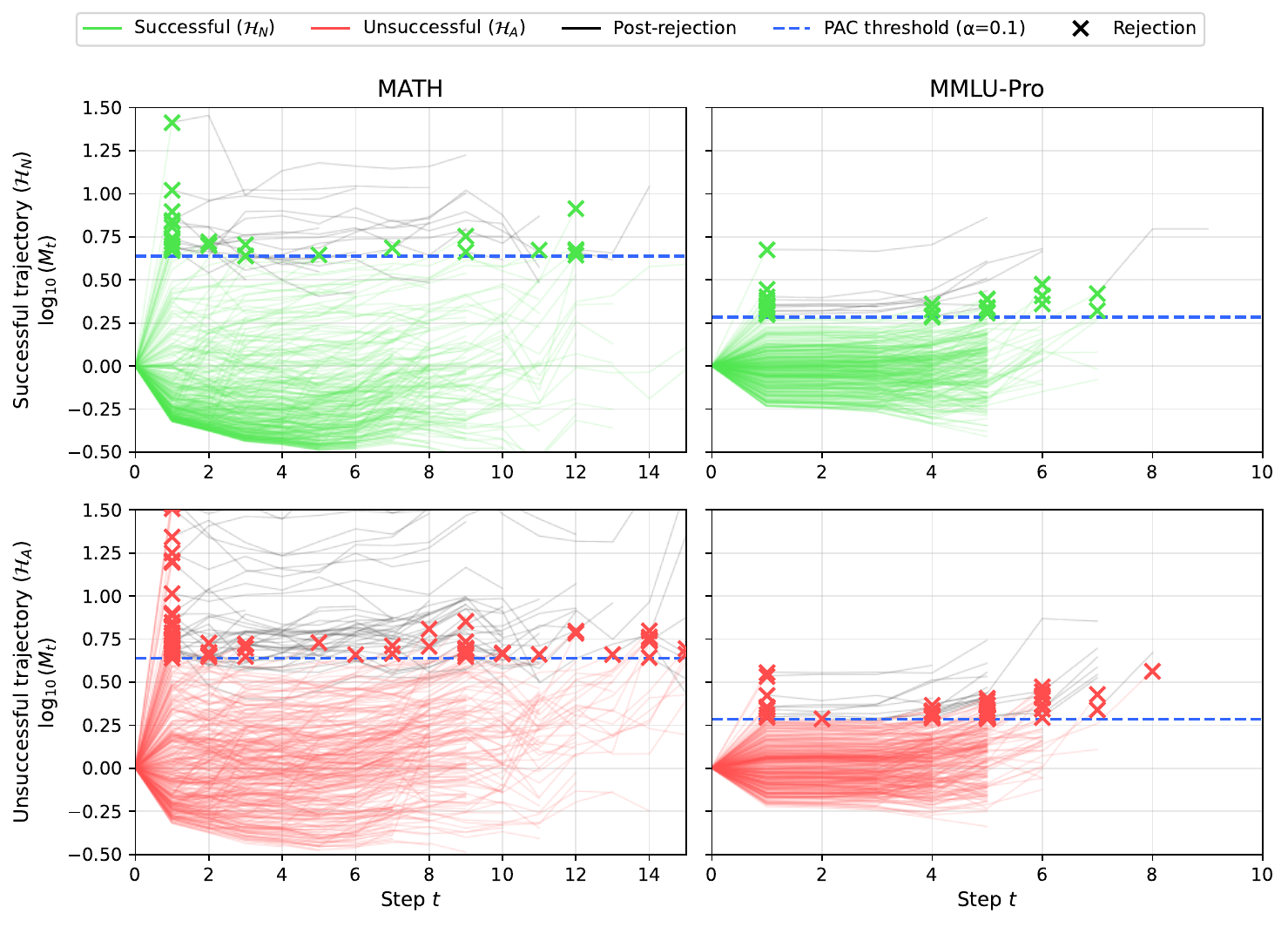}
    \caption{\textbf{Example sequences}. On the MATH dataset (left column), many of the rejections are generated at $M_1$, after the first action, indicating the first action of the agent is important in deciding success. On the MMLU-Pro dataset (right column), which has shorter trajectories, there are more rejections later in the sequence.}
    \label{fig:example_traj}
\end{figure*}

We additionally show some example $M_t$ sequences (plotted as $\log(M_t)$ for visual clarity) in Figure \ref{fig:example_traj} for the MATH and MMLU-Pro datasets. We observe that for both datasets, $\log(M_t)$ inflates above $c_\alpha$ for unsuccessful trajectories ($\mathcal{H}_A$). By contrast, few $M_t$ sequences cross the threshold $c_\alpha$ among successful trajectories. Nonetheless, there is visual heterogeneity in the sequences. In general, we expect more powerful verifiers/PRMs to provide stronger visual separation of $\mathcal{H}_N$ and $\mathcal{H}_A$ trajectories.

\newpage

\subsection{Details on datasets, agents, and verifiers}\label{sec:appdx:datasets}
We provide experiments from six different datasets. For each dataset, we use a particular agent-verifier combination, which we list in Table \ref{tab:verifiers}.

For \textbf{mathematical reasoning}, we use GSM8k \citep{cobbe2021training} for our tool-calling agent experiments and MATH \citep{hendrycks2021measuring} and AIME \citep{aime_1983_2024} for our reasoning model experiments. For \textbf{question-answering}, we use HotpotQA \citep{yang2018hotpotqa} and MedQA \citep{jin2021disease} for Aviary and OctoTools experiments, respectively,  and MMLU-Pro \citep{wang2024mmlu} for the reasoning models. We present results from all datasets except GSM8k and AIME in the main section, and provide results for these two datasets in the Appendix.

For tool-calling agents, we provide the verifier (a judge LLM) with the original problem text and list of tool calls and arguments used. We then prompt the Claude agent with the following system prompt:

\begin{quote}
You are an expert in analyzing agent trajectories and estimating the probability of success. Your final answer should be of the form: \texttt{[PROBABILITY]: [number between 0 and 1]}

The probability value should be a number between 0 and 1. LIMIT YOUR RESPONSE TO JUST \texttt{[PROBABILITY]: [number between 0 and 1]}, or else I will switch to OpenAI.

The agent in question is a LLM-based agent that uses tools to solve problems. The agent may not make more than \texttt{\{max\_tool\_calls\}} total tool calls. If it does, it will be terminated with an error.

The tools this agent can use are:
\begin{itemize}
    \item \texttt{submit\_answer}
    \item \texttt{search}
    \item \texttt{lookup}
\end{itemize}

You will be given a partial trajectory of the agent's actions. Your task is to estimate the probability of success of the agent given the partial trajectory.

Your probability should incorporate the following:
\begin{itemize}
    \item The tools that the agent has used
    \item The arguments that the agent has used, including the syntax of the arguments
    \item The problem text
    \item The number of total tool calls allowed
\end{itemize}

Here is the final answer format: \texttt{[PROBABILITY]: [number between 0 and 1]}

\texttt{\{partial\_trajectory}\}
\end{quote}

For the reasoning model, we simply provide the pretrained process reward model \citep{wang2024math} the reasoning trace and it outputs a logits-based probability that the trajectory is successful after each step. 



\begin{table*}[t]
\centering
\footnotesize
\setlength{\tabcolsep}{6pt}
\renewcommand{\arraystretch}{1.2}

\begin{tabular}{l p{2.2cm} l p{2cm} p{3.5cm}}
\hline
\textbf{Dataset} & \textbf{Domain} & \textbf{Agent} & \textbf{Verifier} & \textbf{Agent Description} \\
\hline

GSM8k \citep{cobbe2021training} 
    & Math reasoning
    & Aviary \citep{narayanan2024aviary}
    & Claude Haiku 3.5
    & Tool-calling agent for math QA. Text-based verifier model. \\

MATH \citep{hendrycks2021measuring} 
    & Math reasoning
    & Claude Sonnet 4 
    & Pretrained PRM \citep{wang2024math}
    & Multi-step reasoning model, with pretrained verifier model. \\

AIME \citep{aime_1983_2024} 
    & Math reasoning
    & Claude Sonnet 4 
    & Pretrained PRM \citep{wang2024math}
    & Multi-step reasoning agent, with pretrained verifier model. \\

HotpotQA \citep{yang2018hotpotqa} 
    & QA
    & Aviary 
    & Claude Haiku 3.5
    & Tool-calling agent for general QA. Text-based verifier model. \\

MedQA \citep{jin2021disease} 
    & QA
    & OctoTools \citep{lu2025octotools}
    & Claude Haiku 3.5
    & Tool-calling agent for medical QA. Text-based verifier model. \\

MMLU-Pro \citep{wang2024mmlu} 
    & QA
    & Claude Sonnet 4 
    & Pretrained PRM \citep{wang2024math}
    & Multi-step reasoning agent, with pretrained verifier model. \\

\hline
\end{tabular}
\caption{List of datasets, agents, and verifiers used in our experiments.}
\end{table*}\label{tab:verifiers}

\begin{table}[t]
\centering
\small
\begin{tabular}{llcc}
\hline
\textbf{Dataset} & \textbf{Verifier} & \textbf{Verifier AUC} & \textbf{Verifier ECE} \\
\hline
MedQA & Claude Haiku 3.5 & 0.700 & 0.021 \\
MATH & Pretrained PRM & 0.692 & 0.184 \\
HotpotQA & Claude Haiku 3.5 & 0.654 & 0.137 \\
AIME & Pretrained PRM & 0.624 & 0.083 \\
GSM8k & Claude Haiku 3.5 & 0.602 & 0.053 \\
MMLU-Pro & Pretrained PRM & 0.592 & 0.086 \\
\hline
\end{tabular}
\caption{Verifier quality across datasets, measured by AUC and expected calibration error (ECE), both averaged across all timesteps and trajectories.}
\label{tab:verifier_quality}
\end{table}

Table ~\ref{tab:verifiers} lists all the datasets, agents, and verifiers used. Table ~\ref{tab:verifier_quality} additional lists some statistics on the quality of these verifiers. As shown, there is substantial variation in the verifier quality. For instance, on MedQA, the verifier has strong discriminative power (AUC 0.7) and is well-calibrated (ECE 0.021). By contrast, on MATH, the verifier has strong predictive power (AUC 0.692) but is poorly calibrated (ECE 0.184). Finally, on GSM8k, the verifier has weak predictive power (AUC 0.602) but is reasonably calibrated (ECE 0.053).

\newpage

\subsubsection{Additional computational details}
We use the default hyperparameter settings in \texttt{scikit-learn} with logistic regression for all experiments presented in this paper (except the density ratio ablation experiments, for which we also use the default SVM settings in \texttt{scikit-learn}). We split $\mathcal{D}_{cal}$ 50/50 into the density ratio estimation and threshold calibration split. Given a set of verifier scores, all experiments in this paper can be completed in under a minute on a standard laptop.

\end{document}